\documentclass[letterpaper, 10pt, conference]{./ieeeconf/ieeeconf}

\usepackage{amsmath, amssymb}
\newtheorem{theorem}{Theorem}
\newtheorem{hypothesis}[theorem]{Hypothesis}
\newtheorem{remark}[theorem]{Remark}
\usepackage{graphicx}
\usepackage{booktabs}
\usepackage[hidelinks]{hyperref}

\newcommand{\KK}{\mathcal{K}}
\newcommand{\DD}{\mathcal{D}}
\newcommand{\N}{\mathbb{N}}
\usepackage{xcolor}

\usepackage[capitalize,nameinlink]{cleveref}

\usepackage{float}
\usepackage{algorithm}
\usepackage{algpseudocode}

\title{\LARGE \bf
Infeasible optimization problems and the hierarchical augmented Lagrangian method in imitation learning
}

\author{Roland Andrews, Justin Carpentier, and Ajay Sathya}

\begin{document}

\maketitle
\thispagestyle{empty}
\pagestyle{empty}

\begin{abstract}
Imitation learning (IL) is an effective approach to train complex robotics policies.
Recent works have introduced hard constraints into imitation-learning optimization problems to ensure safety, stability, and robustness of the learned policy.
However, we argue that these constraints are sometimes infeasible, which can lead to unstable or difficult training dynamics.
We study a simple remedy for such situations based on recent theoretical results on the augmented Lagrangian method in infeasible settings.
We show that our approach drives the learned policy toward the solution of a closest-feasible constrained IL problem with desirable properties.
The method is illustrated on a toy driving example with a total-acceleration constraint and pedestrian-safety constraints, a setting in which infeasibility can naturally arise while still allowing a safe learned policy.
\end{abstract}

\section{INTRODUCTION}
Imitation learning (IL) is an effective approach for training complex robotic policies: it uses expert demonstrations to define all or part of the loss function used to train a policy.
IL is particularly effective when a loss function cannot otherwise be specified easily, or when an available loss provides only a weak gradient signal over the relevant regions of the state space. 
Behavioral cloning is a common IL methodology that consists in learning to imitate expert actions \(U^* = (u^*_k)_{ 0\leq k \leq K}\) together with their corresponding states \( \Xi^* = (\xi_k)_{ 0\leq k \leq K}\):
\begin{equation} \label{eq:behavior_cloning}
   \underset{\hat u \in \DD}{\mathrm{minimize}}\quad \mathcal{L}_{U^* , \Xi^*}( \hat u)  + r(\hat u)
\end{equation}
Here \(\DD\) is a subset of a Hilbert space of functions contained in \(L^2\), typically parameterized by a function class such as a family of neural networks, and \(\mathcal{L}\) is a loss that encourages the learned policy to behave similarly to the expert demonstration.
\(r\) is a regularizer that prevents overfitting; for the theoretical analysis we assume that \(r\) is convex in \(\hat u \) and level bounded. 
In practice, \(r\) is often not convex in \(\hat u \). For instance, if \(\hat u \) is represented by a neural network and \(r\) is a squared penalty on its weights, as in weight decay, then \(r\) is convex 
in the parameters of the neural network, but not on the space of neural networks.
 We nevertheless observe in \Cref{sec:toy_car_experiment} that the conclusions suggested by the convex analysis remain informative in practice. 
In the present work we illustrate our results by choosing the loss \( \mathcal{L}_{ U^* , \Xi^*}( \hat u ) = \frac{1}{K} \sum_{k=0}^K \Vert \hat u(\xi_k) - u^*_k \Vert^2\).
This type of behavioral cloning typically suffers from compounding errors along the learned policy's trajectory, but the results of the present work can easily be extended to more 
elaborate loss functions.

Optimization problem \eqref{eq:behavior_cloning} suffers from safety, robustness, and stability issues. 
Indeed, the expert behavior may itself violate constraints that we would like the learned policy to satisfy
 (for instance, a human driver who sometimes exceeds the speed limit or fails to remain perfectly within lane boundaries). 
It may also happen that the expert data are themselves safe and robust, but that an embodiment mismatch, such as differences in morphology, actuation, or dynamics, makes naive imitation unsafe.
One may think, for example, of a robot learning to walk from human walking data.
For these reasons, the recent literature has studied constrained imitation learning:

\begin{align} \label{eq:constrained IL}
   \underset{\hat{u} \in \DD}{\mathrm{minimize}} &\quad  \mathcal{L}_{U^* , \Xi^*}( \hat u ) + r(\hat u) \\
   \text{s.t.} &\quad C(\hat{u}) \in \KK \nonumber
\end{align}

The condition \(C(\hat{u}) \in \KK\) represents constraints added to ensure that the learned policy is safe, stable, and robust.

For the theoretical discussion we assume that \(\DD\) is a convex and closed.
We also assume that the mapping \(C\) and the convex cone \(\KK\) are such that the set
\[
\{(\hat{u},\phi) \mid \phi \in C(\hat{u}) - \KK\}
\]
is convex, closed, and non-empty. When problem \eqref{eq:constrained IL} is feasible, this last condition simply corresponds to stating that the constraints are convex.
We denote by \(\KK^\circ\) the polar cone of \(\KK\).
These assumptions guarantee that \eqref{eq:constrained IL} is a proper, closed, convex problem. Of course the optimization problem is generally non-convex in the 
parameters of the function class that parameterizes \(\hat u \). 

Infeasibility of \eqref{eq:constrained IL} arises when the set \(\{ \hat{u} \mid C(\hat{u}) \in \KK \}\) is empty. We argue in the literature
review of \Cref{sec:related_works}, as well as through the simple yet intuitive example of \Cref{sec:toy_car_experiment}, that infeasibility may naturally arise in many constrained IL settings.

\subsection{Related Works} \label{sec:related_works}
The existing literature justifies the constrained imitation learning setting \eqref{eq:constrained IL}. However, the cited works do not all formulate constrained IL in the same way: some constrain imitation through reward lower bounds, some through Lyapunov or semidefinite certificates, some through control barrier functions, and some through differentiable completion and correction procedures. Infeasibility is therefore a plausible issue in several of these methods, but it is usually not the main object of study.

\cite{wang2021reward} introduces Reward-Constrained Behavior Cloning, which combines a behavior-cloning objective with a lower-bound constraint on expected return,
 or equivalently on an expected Q-value under the learned policy. The goal is not merely to imitate the demonstration, but to preserve desirable behavior patterns from 
 demonstrations while avoiding severe performance degradation. This paper is directly relevant to infeasibility, since the reward threshold is
  user-chosen and the authors explicitly note that ``the constraints may be unsatisfied for a long time, which [\dots] makes the learning process unstable''.
   The feasibility of the constrained problem depends critically on the chosen lower bound as well as on the constantly evolving estimated Q-value.

\cite{huang2021ekmp} proposes a kernelized imitation learning framework for trajectory generation with obstacle avoidance and linear or nonlinear hard constraints. Their method relies on local linearization of nonlinear costs and constraints together with iterative constrained updates using Lagrange multipliers and proximal regularization. Interestingly, their setting matches exactly the convex setting \eqref{eq:constrained IL}.

\cite{yin2021imitation} derives convex stability and safety conditions for certain neural-network controllers in linear time-invariant systems by combining Lyapunov 
arguments with quadratic constraints on the nonlinearities. The resulting constrained imitation learning problem is solved with an ADMM-based algorithm. 
 Since ADMM can be interpreted as a splitting scheme related to augmented Lagrangian methods,
 it is likely that our results may be extendable to their algorithm and ADMM in general, although we do not investigate that question here.
 
In \cite{cosner2022end}, Control Barrier Functions are used to transfer safety guarantees from a robust expert controller to an end-to-end imitation-learned controller.
Their guarantees are formulated in terms of set invariance and input-to-state safety, and the robust expert controller is obtained through a Tunable Robust Optimization Program (TR-OP), 
which is a constrained imitation-learning optimization problem that accounts for matched disturbances and state uncertainty.
  When such safety constraints are combined with additional design requirements, infeasibility or excessive conservatism may arise in practice. 
  Although we do not use their TR-OP, we illustrate a case of infeasibility that would apply to their examples in \Cref{sec:toy_car_experiment}.
  A related direction is \cite{yang2024enhancing}, where a customized control barrier function is itself learned from safety demonstrations and then used for shielding 
  in a learning-from-demonstration pipeline. It is possible that learning the constraints may itself increase the chance of infeasible constraints.

\cite{diehl2022differentiable} proposes a Differentiable Constrained Imitation Learning framework combining three ingredients: a neural network predicts controls, an explicit completion step enforces equality constraints induced by the dynamics, and a gradient-based correction step reduces inequality violations. During training, soft penalty terms are also used on the remaining constraint residuals. Thus, their method is more structured than a plain penalty method. Importantly for our purposes, they explicitly discuss robustness to incorrect constraints that may render the problem infeasible. This is close in spirit to our motivation, although their work does not provide a general convergence theory for infeasible constrained IL problems.

\subsection{Contribution}
Our contribution is to analyze the issue of infeasibility explicitly and to provide a principled behavior when the imposed constrained IL problem is infeasible.
We build on existing work on the inexact augmented Lagrangian method (IALM) and extend the theory to the case of an augmented Lagrangian method with 
several distinct penalty terms instead of a single uniform penalty for all constraints. We show that in this setting, the IALM converges to a well-defined analytical 
``closest feasible problem'' \eqref{eq:closest_feasible_IL2} with desirable properties. Namely, it respects the most important constraints while minimizing the violation of the 
constraints of lesser importance. We illustrate the theory with a toy experiment.

\section{The IALM algorithm for safe infeasible IL}
The behavior of the augmented Lagrangian method (ALM) for infeasible problems was first studied for quadratic programming (QP) in \cite{chiche2016augmented}.
Recent works \cite{dai2023augmented,andrews2025augmented} study the general convex case. 
They show that when the constraints of a convex problem are infeasible, the ALM converges to the ``closest feasible problem,'' 
defined as the problem with the same objective and constraints shifted by the smallest quantity that makes them feasible.
Equivalently, it is the problem with the same objective in which the constraint violation norm is minimized first (see \cite[Equation 34]{andrews2025augmented} for the analytical formulation).

In our setting, we separate the constraints into two types with different penalty sequences, and we show that this allows us to choose which constraints
should be satisfied exactly and which should only have their violation minimized. This is an improvement, demonstrated here in the present setting, over the framework of
\cite{dai2023augmented,andrews2025augmented} where all constraints have the same penalty term.

We rewrite \eqref{eq:constrained IL} by separating the constraints into two groups:
\begin{align} \label{eq:constrained IL 2}
   \underset{\hat{u} \in \DD}{\mathrm{minimize}} &\quad  \mathcal{L}_{U^* , \Xi^*}( \hat u ) + r(\hat u) \\
   \text{s.t.} &\quad C_A(\hat{u}) \in \KK_A \nonumber \\
    &\quad C_B(\hat{u}) \in \KK_B . \nonumber
\end{align}

We use the notation \( \lambda = (\lambda_A, \lambda_B)\), \(\KK = \KK_A \times \KK_B\), and \(C(\cdot) = (C_A(\cdot), C_B(\cdot))\). 

The augmented Lagrangian (dropping constant terms, see \cite[Appendix A]{andrews2025augmented} for details on reformulations and variants of the augmented Lagrangian) is 

\begin{align}
   L_{\gamma_A, \gamma_B}(\hat{u}, \lambda_A, \lambda_B) =\ & f(\hat{u}) + \frac{\gamma_A}{2} \left\Vert  \mathrm{Proj}_{\KK_A^\circ}
      \left( C_A(\hat{u}) - \frac{\lambda_A}{\gamma_A} \right) \right\Vert^2 \nonumber \\
   & + \frac{\gamma_B}{2} \left\Vert  \mathrm{Proj}_{\KK_B^\circ}
      \left( C_B(\hat{u}) - \frac{\lambda_B}{\gamma_B} \right) \right\Vert^2
\end{align}

We write \( u \underset{\varepsilon}{\approx} \underset{\hat{u} \in \DD}{\mathrm{argmin}}\; L_{\gamma_A, \gamma_B}(\hat{u}, \lambda_A, \lambda_B) \) 
to mean that \(u\) approximately minimizes \( L_{\gamma_A, \gamma_B}( \cdot, \lambda_A, \lambda_B) \) with error \(\varepsilon\) in the sense that
\begin{equation}
   L_{\gamma_A, \gamma_B}( u, \lambda_A, \lambda_B) - \underset{\hat{u} \in \DD}{\mathrm{argmin}}\; L_{\gamma_A, \gamma_B}(\hat{u}, \lambda_A, \lambda_B) \leq \varepsilon
\end{equation}

\begin{algorithm}[t]
   \caption{Hierarchical IALM}
   \label{algo:IALM_IL}
   \begin{algorithmic}[1]
   \Require Problem data, initial $\hat{u}^{0}$, $\lambda^{0}$, sequence of positive errors \( (\varepsilon_k)_{i\in \N}\), sequences of  positive penalties \( (\gamma_{A,i} , \gamma_{B,i})_{i\in \N} \)
   \For{$i = 0, 1, \ldots$}
       \State $\hat{u}^{i+1} \underset{\varepsilon_k}{\approx} \arg\min_{\hat{u} \in \DD}\; L_{\gamma_{A,i}, \gamma_{B,i}}(\hat{u}, \lambda_A^i, \lambda_B^i) $
       \State $\lambda_A^{i+1} \gets -\gamma_{A,i} \mathrm{Proj}_{\KK_A^\circ}(C(\hat{u}^{i+1}) - \lambda_A^i/\gamma_{A,i})$ 
       \State $\lambda_B^{i+1} \gets -\gamma_{B,i} \mathrm{Proj}_{\KK_B^\circ}(C(\hat{u}^{i+1}) - \lambda_B^i/\gamma_{B,i})$ 
   \EndFor
   \end{algorithmic}
   \end{algorithm}

   \vspace{1em}
   \begin{hypothesis} \label{assumption:errors_and_step_sizes}
      The step sizes \((\gamma_k)_{k\in\N}\) are positive, the errors \((\varepsilon_k)_{k\in\N}\) are nonnegative, and the following assumptions hold:
   \begin{itemize}
      \item \(\sum_{k=1}^\infty \varepsilon_k < \infty\).
      \item \(\tfrac{\varepsilon_{k+1}}{\gamma_k} \sum_{i=0}^k \gamma_i  = O\left( \tfrac{1}{\sqrt{\sum_{i=0}^k \gamma_i }} \right) \).
      \item \(\sum_{k=1}^\infty \left( \left( \varepsilon_{k} / \gamma_k \right)^2 \sum_{i=0}^{k-1} \gamma_i \right) < \infty\).
      \item \(\sum_{k=1}^{\infty} \left( ( \varepsilon_{k} / \gamma_k ) \sum_{i=1}^{k}   \gamma_{i-1} \right)  < \infty\).
      \item \(\sum_{k=0}^\infty \gamma_k = \infty\).
   \end{itemize}
   \end{hypothesis}
   
   These assumptions ensure that the error decreases sufficiently fast to zero. For example, if the step sizes are constant \(\gamma_i = \gamma\),
   then the choice of error \( \varepsilon_k = \tfrac{1}{ ((k+1)\ln(k+1))^2} \) satisfies all the assumptions.
 The last hypothesis \(\sum_{k=0}^\infty \gamma_k = \infty\) is standard in the optimization literature.

 \vspace{1em}

\begin{theorem} \label{thm:IL}
   Assume that the sequences \((\gamma_{A,i} , \gamma_{B,i})_{i\in \N}\) and \( (\varepsilon_k)_{i\in \N}\) are such that Hypothesis~\ref{assumption:errors_and_step_sizes} is satisfied
   for both sequences of step sizes, and that \( \gamma_{A,i}/\gamma_{B,i} \to 0\). Assume moreover that \(C_B(\hat{u}) \in \KK_B \) is feasible and that the constraints have no recession direction as defined in \cite[Definition 3]{andrews2025augmented}. 
   Then the iterates of \Cref{algo:IALM_IL} converge to the solutions of the trilevel problem 
   \begin{align} \label{eq:closest_feasible_IL}
      \underset{\hat{u} \in \DD}{\mathrm{minimize}} &\quad  \mathcal{L}_{U^* , \Xi^*}( \hat u ) + r(\hat u) \\
      \text{s.t.} &\quad \hat{u} \in \underset{\tilde u \in \DD}{\mathrm{argmin}}\quad \left\Vert C_A(\tilde u) - \mathrm{Proj}_{\KK_A} (C_A(\tilde u)) \right\Vert \nonumber \\
         & \text{s.t.} \quad C_B(\hat{u}) \in \KK_B . \nonumber
   \end{align}
\end{theorem}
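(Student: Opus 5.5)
The plan is to reduce \Cref{thm:IL} to the single-penalty infeasible IALM theory of \cite{dai2023augmented,andrews2025augmented}. The device is a change of variables that turns the two penalty sequences into one. Set $\gamma_i := \gamma_{B,i}$ and $\rho_i := \gamma_{A,i}/\gamma_{B,i}\to 0$. Rescale the $A$-block by writing $\tilde C_A = \sqrt{\rho_i}\,C_A$, and correspondingly $\tilde\lambda_A = \lambda_A/\sqrt{\rho_i}$. Since $\KK_A^\circ$ is a cone, projection commutes with positive scaling. The $A$-penalty then reads
\[
\tfrac{\gamma_i}{2}\bigl\Vert \mathrm{Proj}_{\KK_A^\circ}(\sqrt{\rho_i}C_A(\hat u)-\tilde\lambda_A/\gamma_i)\bigr\Vert^2 .
\]
So \Cref{algo:IALM_IL} is exactly a uniform-penalty IALM, with penalty sequence $\gamma_i$, applied to the problem with constraint map $(\sqrt{\rho_i}C_A, C_B)$ into $\KK$. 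The only non-standard feature is that this map varies slowly with $i$. The multiplier updates transform consistently, up to the factor $\sqrt{\rho_{i+1}/\rho_i}$ when passing from one iteration to the next. This factor must be tracked as a perturbation.

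Next, we recall the structure of the proof in \cite{andrews2025augmented}. There the ALM is interpreted as an inexact proximal point method on the dual. When the problem is infeasible, the dual iterates diverge linearly along $-\gamma\, v$, where $v$ is the minimal-norm element of $\overline{C(\DD)-\KK}$ (the infimal displacement vector). The primal iterates then converge to the closest feasible problem, which shifts the constraints by $v$. Hypothesis~\ref{assumption:errors_and_step_sizes} is precisely what makes the error terms summable in that argument, and $\sum\gamma_k=\infty$ drives the divergence. We apply this to the rescaled problem. Since $C_B\in\KK_B$ is feasible, the $B$-block of the displacement vector vanishes in the limit. The $A$-block is $\sqrt{\rho}\,v_A(\rho)$, where $v(\rho)$ is the minimal-norm violation with weight $\rho$ on the $A$-block. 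A standard weighted-penalty (lexicographic) argument shows that, as $\rho\to 0$, minimizing $\rho\|d_{\KK_A}(C_A)\|^2+\|d_{\KK_B}(C_B)\|^2$ selects points with $C_B\in\KK_B$ that minimize $\|C_A-\mathrm{Proj}_{\KK_A}C_A\|$. This gives the middle level of \eqref{eq:closest_feasible_IL}. The no-recession-direction assumption ensures that this infimum is attained, and that the shifted problems have bounded solution sets, so primal iterates have cluster points. The objective is handled by the outer level exactly as in \cite{andrews2025augmented}: optimality conditions of the shifted problem, together with convexity and level-boundedness of $f=\mathcal L+r$, identify the cluster points as solutions of the trilevel problem.

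The main obstacle is making the reduction rigorous when the rescaling $\rho_i$ changes at every iteration. The existing theory is stated for a fixed constraint map, whereas here the shift vector $v(\rho_i)$ itself moves. I would first try to establish a Fejér-type inequality for the scaled dual iterates relative to the moving reference point $-\sum_{j<i}\gamma_j v(\rho_j)$. This would control the drift through $\|v(\rho_{i+1})-v(\rho_i)\|$ and the factor $\sqrt{\rho_{i+1}/\rho_i}-1$. If those increments are not summable without further assumptions, I would fall back on a two-timescale argument. In that argument the $B$-multipliers behave as in a feasible ALM and converge. The $A$-multipliers, divided by $\sum_j\gamma_{A,j}$, converge to $-v_A$, the minimal $A$-violation over $\{C_B\in\KK_B\}$. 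This is obtained by applying the infeasible-ALM result to the $A$-block with the $B$-constraint treated as part of $\DD$ asymptotically, which is justified because $\gamma_{B,i}/\gamma_{A,i}\to\infty$ forces $B$-feasibility faster than the $A$-penalty can trade it off.
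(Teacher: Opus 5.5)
Your architecture is the paper's. The weighted-to-lexicographic residual limit is its Step 2, and your $\sqrt{\rho_i}$ rescaling is its block-diagonal metric $\Gamma_i$ in other coordinates. The step that actually fails, though, is not the one you flag. It is the sentence claiming the objective level is identified ``exactly as in'' the one-penalty theory; the paper's Step 5 makes the same unjustified move.

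For fixed $\rho$, the limit minimizes $f$ over $F_\rho=\{u\in\DD: C(u)-\bar s(\rho)\in\KK\}$. As $\rho\to0$ these sets need not converge to the lexicographic set $F_0$. Meanwhile the $A$-multipliers grow like $\sum_j\gamma_{A,j}$ and amplify the second-order gap between $F_\rho$ and $F_0$. Under the stated hypotheses alone the conclusion can fail. Here is an example:
\begin{itemize}
\item Take $\DD=\{(x,y)\in\mathbb{R}^2: x\le\tfrac12\}$ and $f=\tfrac12(x-\tfrac12)^2+\tfrac12y^2$.
\item Take $C_B=y$ and $C_A=1-y+\tfrac{y^2}{1-x}+10\max(0,-x)$, with $\KK_A=\KK_B=\mathbb{R}_-$.
\item Then $f$ is strongly convex and every set $\{C_A\le a,\ C_B\le b\}$ is bounded.
\item The level-2 set is $[0,\tfrac12]\times\{0\}$, so the trilevel solution is $(\tfrac12,0)$.
\item For small $\rho$, however, $F_\rho$ is the single point $(0,t(\rho))$ with $t(\rho)\approx\rho$.
\end{itemize}
Run Algorithm 1 with $\varepsilon_k=0$, $\gamma_{B,i}=1$ and $\gamma_{A,i}=(i+1)^{-1/4}$. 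Each iterate minimizes the Lagrangian at the updated multipliers $\mu^{i+1}$, where $\mu=-\lambda\ge0$.
\begin{itemize}
\item The $y$-stationarity, together with the recursion for $\mu_A-\mu_B$, gives $y^i\approx\gamma_{A,i}$.
\item The $A$-multiplier grows as $\mu_A^i\approx\tfrac43 i^{3/4}$.
\item The $x$-subgradient condition holds at $x=0$ once $\mu_A^i(y^i)^2\ge\tfrac12$, and here $\mu_A^i(y^i)^2\approx\tfrac43 i^{1/4}\to\infty$.
\end{itemize}
Hence $\hat u^i\to(0,0)$, where $f=\tfrac18>0=f(\tfrac12,0)$. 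With the paper's schedule $\gamma_{A,i}=5/(i+1)$, $\gamma_{B,i}=15$, the same computation gives $\mu_A^i(y^i)^2\to0$ and the correct limit. So an extra hypothesis is needed that couples the decay of $\gamma_{A,i}/\gamma_{B,i}$ to the growth of $\sum_j\gamma_{A,j}$. This is in the spirit of the Attouch--Czarnecki conditions for hierarchical penalty schemes.

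The same example shows that neither of your routes through the tracking step can rescue the argument; residual tracking in fact holds there.
\begin{itemize}
\item Your fallback claim that the $B$-multipliers ``behave as in a feasible ALM and converge'' is false. Here $\mu_B^i\approx\mu_A^i\to\infty$. In general they grow like $\sum_j\gamma_{A,j}$ times a $B$-multiplier of the level-2 problem, which is nonzero in the case of interest, where $B$ blocks $A$.
\item Your primary route anchors a Fejér inequality at multipliers of the $\rho_i$-shifted problems. If it succeeded, it would identify the limit as $\lim_{\rho\to0}\arg\min_{F_\rho}f=(0,0)$, which is wrong even for the paper's schedule. Consistently with this, those multipliers are of order at least $\rho_i^{-2}$ and their drift is not summable.
\end{itemize}
When the conclusion does hold, the iterates simply do not follow the shifted problems. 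What is missing is a direct comparison of Lagrangian values against a fixed trilevel solution. It must come with a rate condition ensuring that the multiplier-weighted gap between $\bar s(\rho_i)$ and the lexicographic residual tends to zero. Neither your proposal nor the paper supplies this.
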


\begin{proof}
The proof is given in Appendix~\ref{app:proof_main_theorem}. 
\end{proof}

\vspace{1em}

\begin{remark}
   When both constraints \( C_A(\hat{u}) \in \KK_A \) and \( C_B(\hat{u}) \in \KK_B \) are feasible simultaneously, \eqref{eq:closest_feasible_IL2} is equal to \eqref{eq:constrained IL 2}.
    Indeed, in that case,
    \(\underset{\tilde u \in \DD}{\mathrm{min}}\quad \left\Vert C_A(\tilde u) - \mathrm{Proj}_{\KK_A} (C_A(\tilde u)) \right\Vert = 0\)
    simply means \(C_A(\hat{u}) \in \KK_A \).
    Thus, when problem \eqref{eq:constrained IL 2} is feasible, \Cref{algo:IALM_IL} simply solves it. 
\end{remark}

\section{Toy-car experiment} \label{sec:toy_car_experiment}

\begin{table*}[!t]
   \centering
   \caption{Comparison of pedestrian collision rates (\%) under different control/optimization schemes.}
   \label{table:accident_statistics}
   \resizebox{\textwidth}{!}{%
   \begin{tabular}{lcccc}
       \toprule
       Method & Expert Controller & Hierarchical IALM & Only Total Acceleration Constraint & Both Constraints at Same Level \\
       \midrule
       Collision Rate (\%) & 18\% & 0\% & 31\% & 14\% \\
       \bottomrule
   \end{tabular}
   }
\end{table*}

In autonomous driving, one may wish to impose both a maximum admissible acceleration and a constraint ensuring a safe distance from pedestrians.
We call constraint \(A\) the constraint related to maximum acceleration and constraint \(B\) the one related to pedestrian safety.
Such constraints allow one to learn from expert examples while favoring safe behavior.
However, a demonstration in which the driver brakes sharply to avoid a collision may be infeasible: respecting the maximum acceleration constraints forbids braking sharply, and 
respecting a safety constraint around pedestrians might require sharp braking, and therefore high acceleration, if the vehicle has significant speed and the pedestrian is detected late.
Discarding such data would not be a good solution, since they still contain expert behavior in critical scenarios; indeed, even an autonomous vehicle might encounter pedestrians who step onto the road unexpectedly.

We study a toy experiment in which a car, modeled as a kinematic bicycle, drives on a closed stadium-shaped track and a pedestrian may appear suddenly on a crosswalk. 
See \Cref{fig:toy_car_environment_and_rollouts} for a representation of the setting. 
The vehicle state is \(x=(p_x,p_y,\psi,v)\), where \((p_x,p_y)\) is the position of the vehicle, \(\psi\) its heading, and \(v\) its longitudinal speed. The control is \(u=(a,\tau)\),
 where \(a\) is the longitudinal acceleration and \(\tau=\tan(\delta)\) is the steering variable associated with the front-wheel angle \(\delta\). 
Expert demonstrations are generated by a pure-pursuit controller perturbed by Ornstein--Uhlenbeck noise in both control channels. 
When a pedestrian appears on the road at short distance, the expert switches to a braking mode with constant negative acceleration until the vehicle comes to a stop. 
The braking level is randomized, and not all expert demonstrations successfully maintain a safe distance from the pedestrian (see \Cref{table:accident_statistics}).
 
The learned policy is a neural network that receives the vehicle state together with a one-hot pedestrian-visibility input. 
In our experiments, the lower-priority constraint \(A\) is a total-acceleration bound, while the higher-priority constraint \(B\) is a pedestrian-braking constraint that is active only when the pedestrian is visible.
 Additional experimental details are given in Appendix~\ref{app:toy_car_details}.

More precisely, in the experiment we use a quadratic constraint on the controls \((\hat a_k,\hat \tau_k)\):
\[
C_A(\hat u) \in \KK_A
\quad\Longleftrightarrow\quad
\forall k,\;
\hat a_k^2 + \left(\frac{v_k^2}{L}\hat \tau_k\right)^2 \le a_{\mathrm{tot,max}}^2,
\]
where \(L\) denotes the wheelbase of the kinematic bicycle model.
Moreover,
\[
C_B(\hat u) \in \KK_B
\quad\Longleftrightarrow\quad
\forall k_p,\;
\hat a_{k_p} \le -\frac{v_{k_p}^2}{2d_{\mathrm{ped},{k_p}}},
\]
where \(k_p\) indexes the data points at which a pedestrian is visible.
\(d_{\mathrm{ped},k}\) denotes the free distance from the front of the vehicle to the pedestrian, including a safety margin.

We compare \Cref{algo:IALM_IL} with two natural alternatives. 
The three methods are:
\begin{itemize}
   \item Hierarchical IALM: \Cref{algo:IALM_IL} with \(\gamma_{A,i} = 5/(i+1) \) and  \(\gamma_{B,i} = 15 \)
   \item Only total-acceleration constraint: the pedestrian-braking constraint is ignored, and only the total-acceleration constraint is kept, with \(\gamma_{A,i} = 5/(i+1) \)
   \item Both constraints at the same level: the ALM is used with  \(\gamma_{A,i} = \gamma_{B,i} = 15 \). The two constraints are treated equally.
\end{itemize}

The results in \Cref{fig:toy_car_total_acceleration_histograms} indicate that Hierarchical IALM correctly prioritizes the pedestrian-safety constraint, yielding no collisions while minimizing the violation of the 
lower-priority constraint. This is consistent with the theory of \Cref{thm:IL}. By comparison, when only the acceleration constraint is kept and no pedestrian-safety constraint is imposed, the learned policy 
respects that constraint but produces more pedestrian collisions than the expert policy on which it was trained, since it cannot decelerate hard enough to avoid collision.
Finally, if one simply applies the usual IALM with identical penalty terms on all constraints, as in \cite{andrews2025augmented}, the result is a compromise in which neither constraint is respected perfectly, but 
the learned policy violates both constraints less than the expert. This is also consistent with the theory in \cite[Theorem 3.5]{andrews2025augmented}. 

\begin{figure}[t]
    \centering
    \begin{minipage}{0.48\linewidth}
        \centering
        \includegraphics[width=\linewidth]{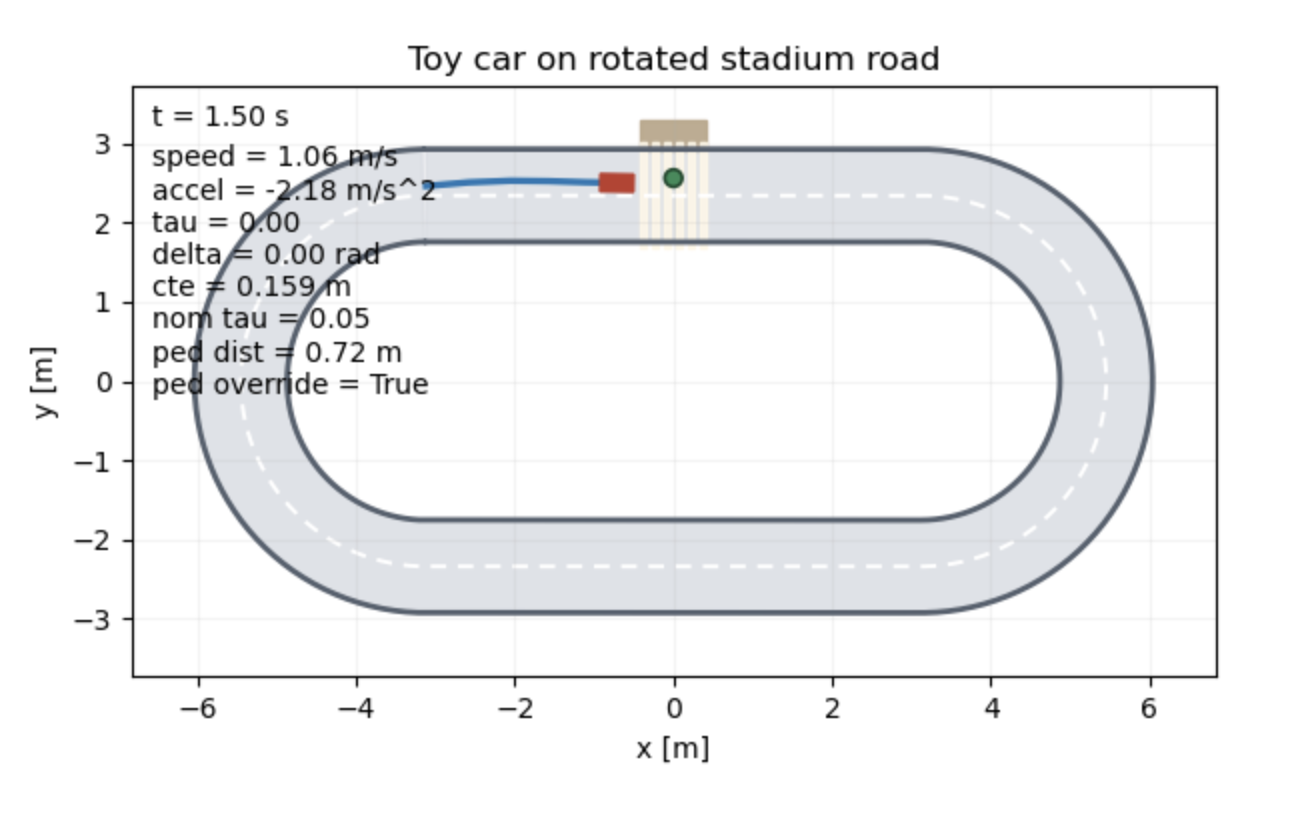}
    \end{minipage}\hfill
    \begin{minipage}{0.48\linewidth}
        \centering
        \includegraphics[width=\linewidth]{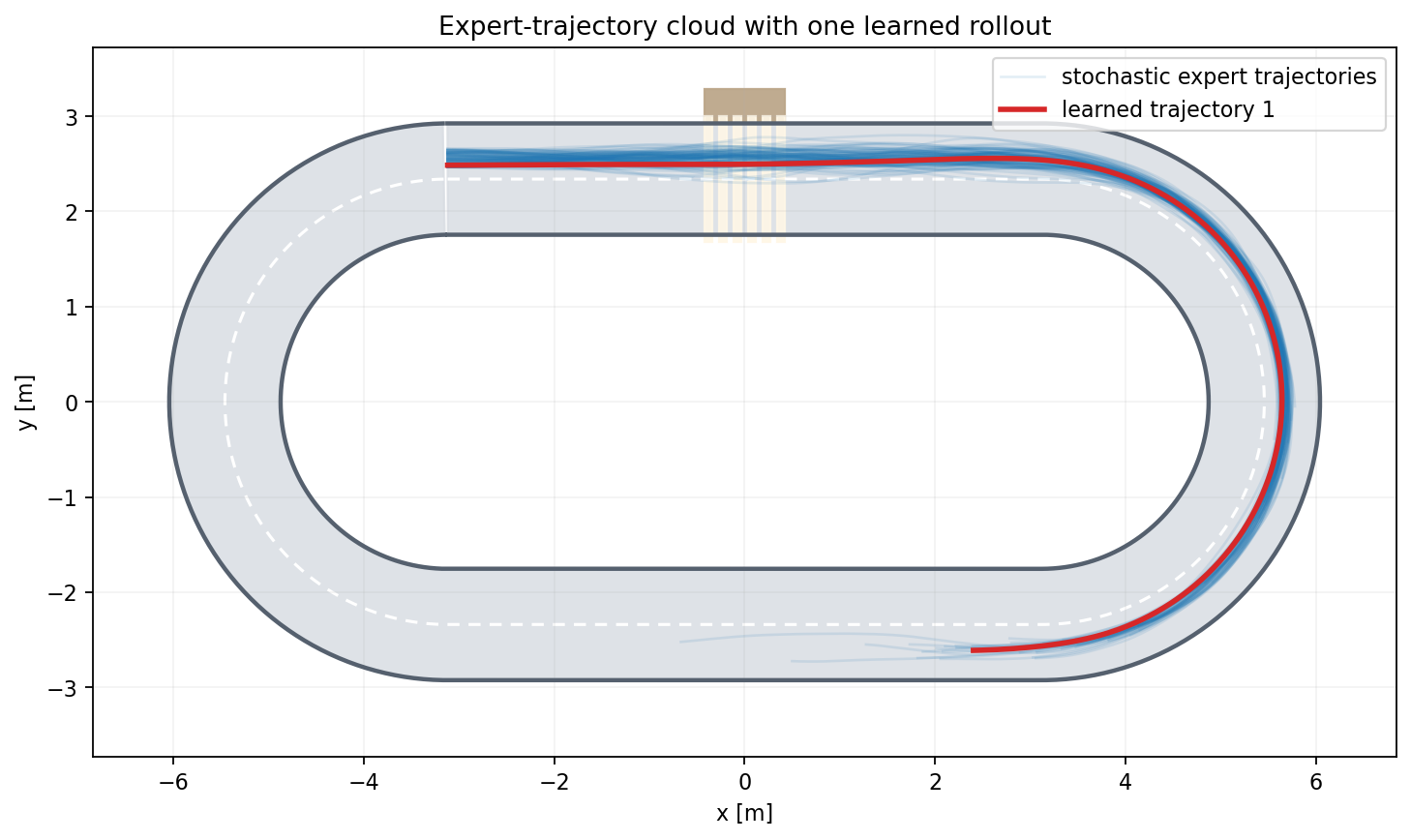}
    \end{minipage}
    \caption{Left: overview of the track together with the pedestrian (green disc) at the moment it appears on the road and an expert car (red rectangle) rollout. Right: cloud of expert trajectories (blue) together with one learned rollout (red), illustrating the path of the training data.}
    \label{fig:toy_car_environment_and_rollouts}
\end{figure}

\begin{figure}[t]
    \centering
    \begin{minipage}{0.32\linewidth}
        \centering
        \includegraphics[width=\linewidth]{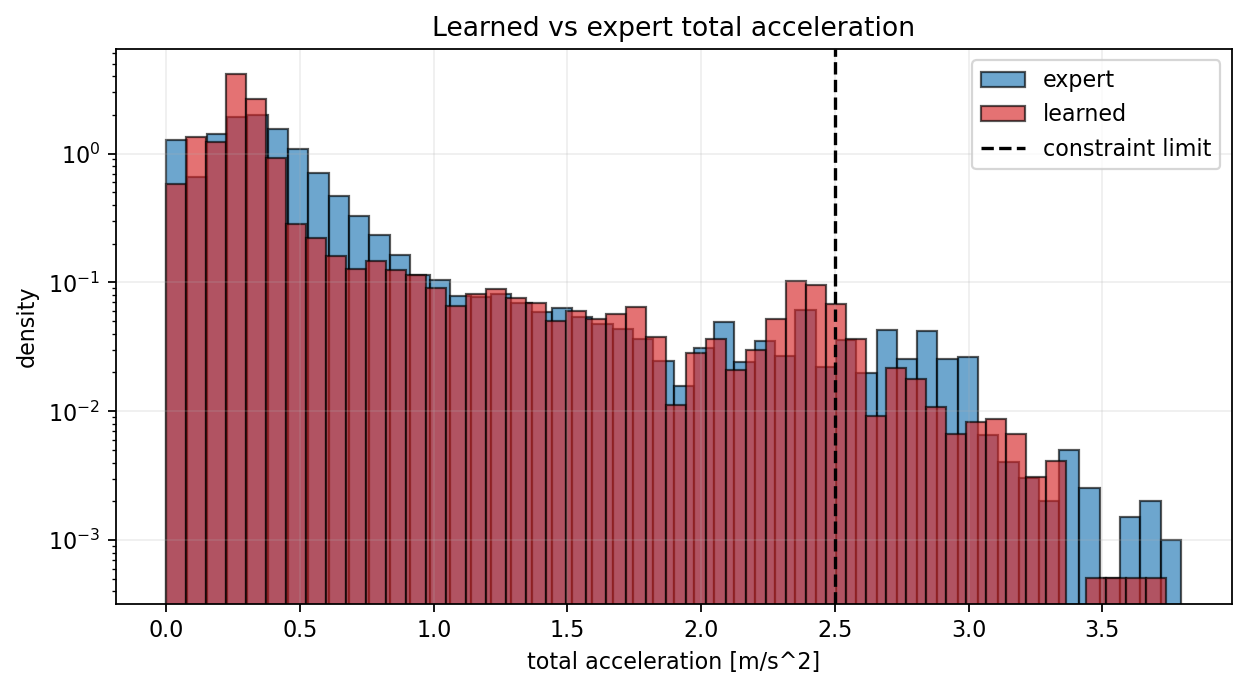}

        \small (a) Hierarchical IALM
    \end{minipage}\hfill
    \begin{minipage}{0.32\linewidth}
        \centering
        \includegraphics[width=\linewidth]{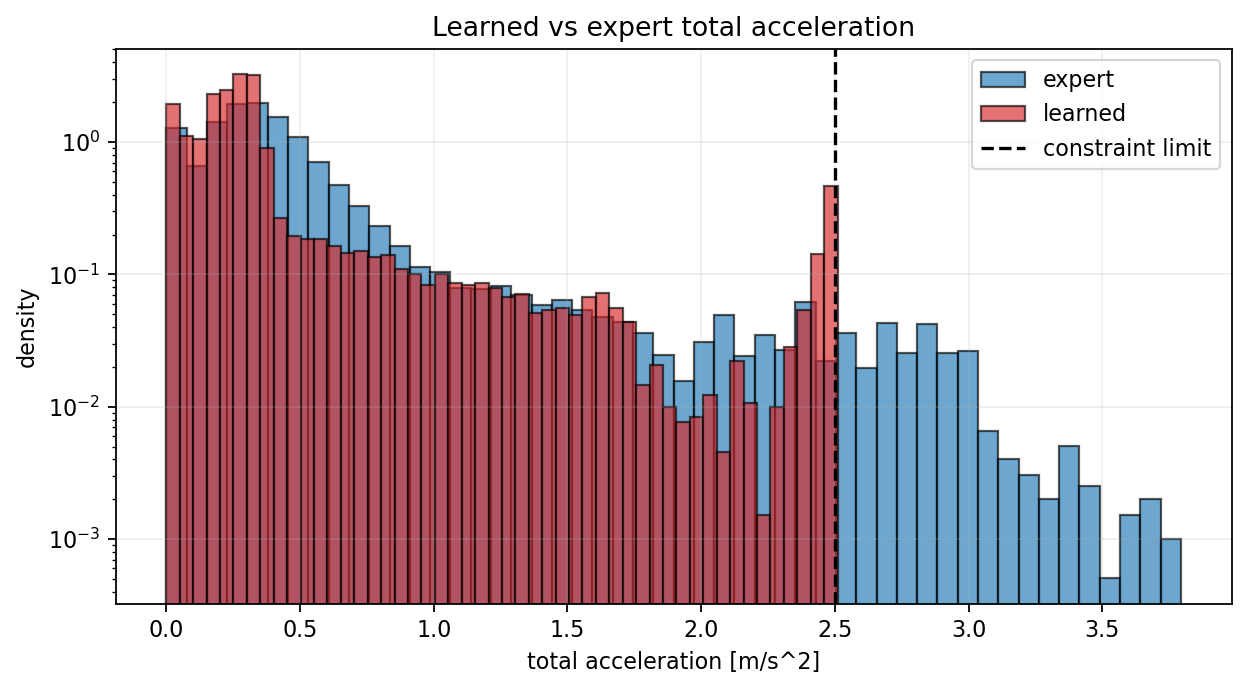}

        \small (b) Total-acceleration constraint only
    \end{minipage}\hfill
    \begin{minipage}{0.32\linewidth}
        \centering
        \includegraphics[width=\linewidth]{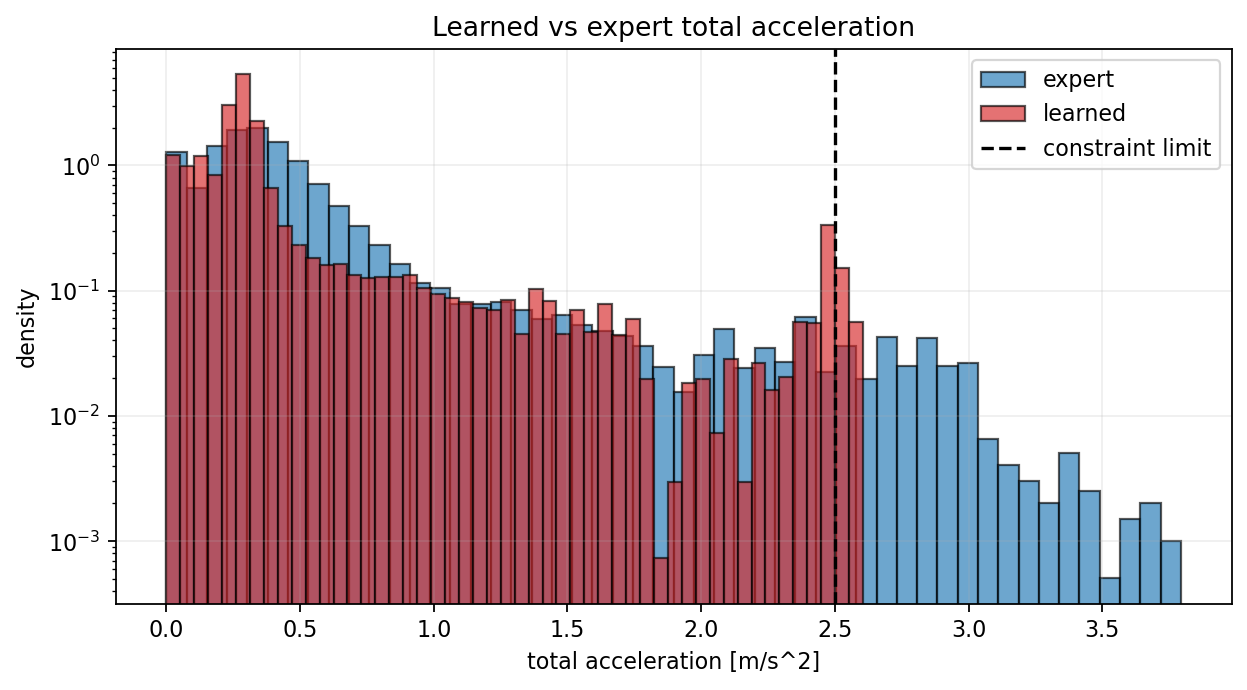}

        \small (c) Both constraints at the same level
    \end{minipage}
    \caption{Comparison of the learned and expert total-acceleration distributions under the three optimization schemes considered in the toy-car study. y-axis is log-scaled.}
    \label{fig:toy_car_total_acceleration_histograms}
\end{figure}
\section{CONCLUSION}
In this work, we study constrained imitation-learning problems in the infeasible case, when safety, robustness, and stability constraints may conflict with one another.
We argue that such infeasible settings arise in practice.
 We propose a two-penalty inexact augmented-Lagrangian method that provides guarantees on the behavior of the imitation-learning optimization problem and an analytically tractable limit for the learned policy. 
 We show that, in the convex setting, the method converges to a lexicographic closest-feasible problem: the high-priority constraints are enforced first when feasible, the lower-priority constraint violations are then minimized, and the imitation objective is optimized over that set.

The toy-car study illustrates this mechanism on a simple but representative example, where pedestrian safety and total-acceleration limits can conflict on emergency-braking demonstrations. 
In that setting, the trilevel hierarchy produces safer behavior than alternatives while remaining close to the expert distribution.
 These results suggest that infeasibility can be treated effectively in constrained imitation learning, and that hierarchical augmented-Lagrangian formulations are a promising way to do so.
  A natural next step is to extend the analysis to richer nonconvex policy classes and more realistic large-scale benchmarks.

\bibliography{references}

\appendices

\section{Proof of the main theorem}
\label{app:proof_main_theorem}

\begin{theorem}
   Assume that \((\gamma_{A,i} , \gamma_{B,i})_{i\in \N}\) and \( (\varepsilon_k)_{i\in \N}\) are such that Hypothesis~\ref{assumption:errors_and_step_sizes} is satisfied
   for both sequences of step sizes, and that \( \gamma_{A,i}/\gamma_{B,i} \to 0\). Assume moreover that \(C_B(\hat{u}) \in \KK_B \) is feasible and that the constraints have no recession direction as defined in \cite[Definition 3]{andrews2025augmented}. 
   Then the iterates of \Cref{algo:IALM_IL} converge to the solutions of the trilevel problem 
   \begin{align} \label{eq:closest_feasible_IL2}
      \underset{\hat{u} \in \DD}{\mathrm{minimize}} &\quad  \mathcal{L}_{U^* , \Xi^*}( \hat u ) + r(\hat u) \\
      \text{s.t.} &\quad \hat{u} \in \underset{\tilde u \in \DD}{\mathrm{argmin}}\quad \left\Vert C_A(\tilde u) - \mathrm{Proj}_{\KK_A} (C_A(\tilde u)) \right\Vert \nonumber \\
         & \text{s.t.} \quad C_B(\hat{u}) \in \KK_B . \nonumber
   \end{align}
\end{theorem}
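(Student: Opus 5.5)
The plan is to reduce the two-penalty scheme to the single-penalty infeasible IALM theory of \cite{andrews2025augmented}, using the ratio condition $\gamma_{A,i}/\gamma_{B,i}\to 0$ to separate the two constraint groups in time scale.

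The first step is a change of variables. Set $\mu_i=\gamma_{A,i}$ and $\rho_i=\gamma_{B,i}/\gamma_{A,i}\to\infty$. Then
\[
L_{\gamma_{A,i},\gamma_{B,i}}(\hat u,\lambda^i)= f(\hat u)+\mu_i\Big[\tfrac12\big\Vert \mathrm{Proj}_{\KK_A^\circ}(C_A(\hat u)-\lambda_A^i/\mu_i)\big\Vert^2+\tfrac{\rho_i}{2}\big\Vert \mathrm{Proj}_{\KK_B^\circ}(C_B(\hat u)-\lambda_B^i/\gamma_{B,i})\big\Vert^2\Big].
\]
This is an augmented Lagrangian with a single step $\mu_i$ for a \emph{reweighted} constraint map $\tilde C_i=(C_A,\sqrt{\rho_i}\,C_B)$. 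Since $\KK_B$ is a cone, $\sqrt{\rho_i}\,C_B\in\KK_B$ is equivalent to $C_B\in\KK_B$, and the multiplier update of \Cref{algo:IALM_IL} matches the single-penalty update for $\tilde C_i$ after rescaling $\lambda_B$ by $\sqrt{\rho_i}$.

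The second step follows \cite{andrews2025augmented}. There, the IALM is interpreted as an inexact proximal point method on the dual, or equivalently on the shifted-constraint perturbation function $p(s)=\inf\{f(\hat u): C(\hat u)-s\in\KK\}$. Under Hypothesis~\ref{assumption:errors_and_step_sizes}, with $\sum\gamma=\infty$ and summable errors, the multipliers satisfy $\lambda^i/\sum_{j<i}\gamma_j\to -v$, where $v$ is the minimal-norm element of the closure of the range of $C(\cdot)-\KK$, i.e.\ the minimal violation shift. The primal iterates then converge to the solutions of the problem with constraints shifted by $v$. The no-recession-direction assumption gives the boundedness and closedness needed there.

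I will run this argument separately for each block. For block $B$, which is feasible, the infimal shift is $v_B=0$, so $\lambda_B^i$ must stay controlled relative to $\gamma_{B,i}$. For block $A$, $\lambda_A^i$ grows like $\sum\gamma_{A}$ in the direction $-v_A$. Here $v_A$ should be the minimal violation of $C_A$ \emph{restricted to} $\{C_B\in\KK_B\}$, because the $B$-penalty dominates.

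The third step, and the main obstacle, is justifying this lexicographic minimal shift. The effective constraint set changes with $i$, since the weights $\rho_i$ vary, so the proximal-point analysis with a fixed maximal monotone operator does not apply directly. I would first try a Fejér-type energy estimate on the pair $(\lambda_A^i/\gamma\text{-sums},\,\lambda_B^i)$ against a reference point $(v_A,\bar\lambda_B)$, where $\bar\lambda_B$ is a multiplier for the $C_B$ constraint in the auxiliary problem of minimizing the $A$-violation subject to $C_B\in\KK_B$. Its existence would come from feasibility of $B$ and a constraint qualification inherited from the no-recession hypothesis.

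Any residual $B$-violation is penalized by $\gamma_{B,i}\gg\gamma_{A,i}$. The cross terms in the estimate should therefore be $O(\gamma_{A,i}/\gamma_{B,i})$, which tends to zero. With the error sums from Hypothesis~\ref{assumption:errors_and_step_sizes}, this should yield $C_B(\hat u^i)-\mathrm{Proj}_{\KK_B}C_B(\hat u^i)\to0$ and $C_A(\hat u^i)-\mathrm{Proj}_{\KK_A}C_A(\hat u^i)\to v_A$.

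Finally, with these violations identified, I would apply the shifted-problem convergence result of \cite{andrews2025augmented} to the problem with constraints $C_A-v_A\in\KK_A$ and $C_B\in\KK_B$, which is now feasible. This gives convergence of $\hat u^i$ to its solution set. That solution set coincides with the trilevel problem \eqref{eq:closest_feasible_IL2}, because for convex constraints the minimizers of the $A$-violation norm over $\{C_B\in\KK_B\}$ are exactly the points satisfying $C_A(\hat u)-v_A\in\KK_A$, by uniqueness of the minimal-norm violation $v_A$.
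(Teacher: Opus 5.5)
The gap is in your third step. You correctly call it the crux, but it is where everything beyond the one-penalty theory happens, and the mechanism you propose for it fails under the stated hypotheses.

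A Fejér-type estimate centred at $(v_A,\bar\lambda_B)$ needs a multiplier $\bar\lambda_B$ for the auxiliary problem of minimizing the $A$-violation over $\{C_B\in\KK_B\}$. Feasibility of $B$ does not give one, and the no-recession assumption is a level-boundedness condition, not a constraint qualification. For example, take $\DD=\mathbb{R}$, $\KK_A=\KK_B=\mathbb{R}_-$, $C_A(u)=1+u$, $C_B(u)=u^2$. The convexity and closedness assumptions hold, and $B$ is feasible with bounded feasible set $\{0\}$. Yet stationarity of any Lagrangian $\tfrac12(1+u)_+^2+\nu u^2$, $\nu\ge0$, at $u=0$ would read $1=0$, so no $\bar\lambda_B$ exists.

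The dual iterates confirm this. Assume exact inner steps, smooth $f$, and $\hat u^i\to0$ as the theorem asserts. Then $|\lambda_A^i|\to\infty$ because $\sum_i\gamma_{A,i}=\infty$. The inner optimality condition $f'(\hat u^{i+1})+|\lambda_A^{i+1}|+2\hat u^{i+1}|\lambda_B^{i+1}|=0$ then forces $|\lambda_B^i|\to\infty$. So there is no fixed dual point for the iterates to be Fejér with respect to.

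Even when $\bar\lambda_B$ exists, perturbation terms that are merely $O(\gamma_{A,i}/\gamma_{B,i})\to0$ do not close a quasi-Fejér recursion, which needs summable errors. The hypotheses only give $\gamma_{A,i}/\gamma_{B,i}\to0$, and in the experiment the ratio is $1/(3(i+1))$, which is not summable.

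The idea you are missing is the paper's Step 2, which proves the lexicographic limit in the primal residual space without any multiplier. Let $\overline{\mathcal S}$ be the closure of the attainable residual set $\{(s_A,s_B)\mid \exists \hat u\in\DD,\ s_A\in C_A(\hat u)-\KK_A,\ s_B\in C_B(\hat u)-\KK_B\}$. Use the \emph{moving} target $\bar s^i\in\arg\min_{\overline{\mathcal S}}\bigl(\gamma_{A,i}\|s_A\|^2+\gamma_{B,i}\|s_B\|^2\bigr)$. Up to the $\sqrt{\rho_i}$ rescaling of the $B$-block, this is exactly the closest-feasible residual of your reweighted map $\tilde C_i$, so your change of variables leads straight to it.

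Comparing with $(\tilde s_A,0)$ for $\tilde s_A\in\mathcal S_{\mathrm{lex}}$ gives $\|\bar s_A^i\|\le\|\tilde s_A\|$ and $\|\bar s_B^i\|^2\le(\gamma_{A,i}/\gamma_{B,i})\|\tilde s_A\|^2\to0$. Hence every accumulation point of $(\bar s^i)$ lies in $\mathcal S_{\mathrm{lex}}\times\{0\}$, using only $\gamma_{A,i}/\gamma_{B,i}\to0$ and closedness. What remains is the $\Gamma_i$-weighted analogue of the one-penalty tracking estimate $\|s^i-\bar s^i\|\to0$, followed by boundedness and identification of the objective value (Steps 4--5).

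The paper imports that tracking estimate by analogy rather than re-deriving it. Still, its decomposition confines the new, lexicographic part to an elementary static lemma whose target always exists. Your route instead rests on a dual object that may not exist.

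Your last step is also not a direct application of a known result. The iterates are not, in general, ALM iterates for the shifted problem. So the objective value must still be identified through the dual estimates, as in the paper's Step 5.
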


\begin{proof}
The proof is a weighted version of the infeasible augmented-Lagrangian argument of \cite{andrews2025augmented}. The only additional point is that the two constraint blocks are penalized with different sequences, and the ratio
\[
\frac{\gamma_{A,i}}{\gamma_{B,i}} \to 0
\]
induces a lexicographic limit. In this appendix we also make explicit one assumption that is implicit in the statement of the theorem: to obtain exactly the condition \(C_B(\hat u)\in \KK_B\) in the limit problem, we assume that the set
\[
\{\hat u \in \DD \mid C_B(\hat u)\in \KK_B\}
\]
is nonempty. Otherwise, one would first obtain a closest-feasible problem for the \(B\)-block.

\paragraph{Step 1: residual formulation.}
Introduce residuals
\[
s_A = C_A(\hat u)-y_A,
\qquad
s_B = C_B(\hat u)-y_B,
\]
with \(y_A\in \KK_A\) and \(y_B\in \KK_B\), and define the attainable residual set
\[
\mathcal{S}
\triangleq
\left\{
(s_A,s_B)\;\middle|\;
\exists \hat u\in\DD,\;
s_A\in C_A(\hat u)-\KK_A,\;
s_B\in C_B(\hat u)-\KK_B
\right\}.
\]
Because the \(B\)-constraints are feasible, the slice
\[
\{s_A \mid (s_A,0)\in \overline{\mathcal S}\}
\]
is nonempty. We then define the lexicographically minimal residual set
\[
\mathcal{S}_{\mathrm{lex}}
\triangleq
\arg\min
\left\{
\|s_A\|
\;\middle|\;
(s_A,0)\in \overline{\mathcal S}
\right\}.
\]
Thus the trilevel problem of the theorem consists in enforcing first the exact feasibility of the \(B\)-block, then the smallest possible violation of the \(A\)-block, and finally the smallest value of the original objective \(f=\mathcal{L}_{U^*,\Xi^*}+r\).

\paragraph{Step 2: weighted closest-feasible residuals.}
For each iteration \(i\), let
\[
\bar s^i=(\bar s_A^i,\bar s_B^i)
\in
\arg\min_{(s_A,s_B)\in\overline{\mathcal S}}
\left(
\gamma_{A,i}\|s_A\|^2+\gamma_{B,i}\|s_B\|^2
\right).
\]
Pick any \(\tilde s_A\in \mathcal{S}_{\mathrm{lex}}\). Since \((\tilde s_A,0)\in\overline{\mathcal S}\), optimality of \(\bar s^i\) gives
\[
\gamma_{A,i}\|\bar s_A^i\|^2+\gamma_{B,i}\|\bar s_B^i\|^2
\leq
\gamma_{A,i}\|\tilde s_A\|^2.
\]
Dividing by \(\gamma_{B,i}\) yields
\[
\frac{\gamma_{A,i}}{\gamma_{B,i}}\|\bar s_A^i\|^2+\|\bar s_B^i\|^2
\leq
\frac{\gamma_{A,i}}{\gamma_{B,i}}\|\tilde s_A\|^2.
\]
Since \(\gamma_{A,i}/\gamma_{B,i}\to 0\), we immediately obtain
\[
\bar s_B^i \to 0.
\]
The same inequality shows that \((\bar s_A^i)\) is bounded. Any accumulation point \((s_A^\infty,s_B^\infty)\) of \((\bar s^i)\) therefore satisfies \(s_B^\infty=0\) and
\[
\|s_A^\infty\|\leq \|\tilde s_A\|.
\]
By minimality of \(\tilde s_A\), this implies \(s_A^\infty\in \mathcal{S}_{\mathrm{lex}}\). Therefore
\[
\bar s_B^i \to 0,
\qquad
\mathrm{Dist}(\bar s_A^i,\mathcal{S}_{\mathrm{lex}})\to 0.
\]
So the weighted closest-feasible residuals converge to the lexicographic residual set.

\paragraph{Step 3: the algorithm tracks the weighted residuals.}
The proof of \cite{andrews2025augmented} extends to the present setting by replacing the scalar penalty parameter with the block-diagonal matrix
\[
\Gamma_i
\triangleq
\begin{pmatrix}
\gamma_{A,i}I & 0\\
0 & \gamma_{B,i}I
\end{pmatrix}.
\]
Equivalently, the dual proximal-point interpretation is unchanged, except that the Euclidean quadratic is replaced by the weighted quadratic induced by \(\Gamma_i^{-1}\). Under the same assumptions on errors and stepsizes as in \cite[Hypothesis 1]{andrews2025augmented}, applied separately to \((\gamma_{A,i})\) and \((\gamma_{B,i})\), the residuals generated by the algorithm track the weighted closest-feasible vectors \(\bar s^i\). In particular, one gets the analog of the one-penalty residual-tracking estimate
\[
\|s_A^i-\bar s_A^i\| \to 0,
\qquad
\|s_B^i-\bar s_B^i\| \to 0,
\]
and therefore
\[
s_B^i \to 0,
\qquad
\mathrm{Dist}(s_A^i,\mathcal{S}_{\mathrm{lex}})\to 0.
\]
This is the point at which the condition \(\gamma_{A,i}/\gamma_{B,i}\to 0\) becomes decisive: the algorithm follows the weighted closest-feasible residuals, and those residuals themselves converge to a lexicographic limit.

\paragraph{Step 4: boundedness of the primal sequence.}
The no-recession-direction assumption is exactly the assumption used in \cite{andrews2025augmented} to obtain level boundedness with respect to perturbations of the constraints. Since the residual sequence \((s_A^i,s_B^i)\) remains eventually in a bounded neighborhood of \(\mathcal{S}_{\mathrm{lex}}\times\{0\}\), and since the same dual estimates as in \cite{andrews2025augmented} keep the objective values \(f(\hat u^i)\) eventually bounded from above, the sequence \((\hat u^i)\) is bounded.

\paragraph{Step 5: identification of accumulation points.}
Take a convergent subsequence \(\hat u^{i_k}\to \hat u^\infty\). Because
\[
s_B^{i_k}\to 0,
\qquad
\mathrm{Dist}(s_A^{i_k},\mathcal{S}_{\mathrm{lex}})\to 0,
\]
and because the residual graph is closed under our standing convexity and closedness assumptions, we obtain
\[
C_B(\hat u^\infty)\in \KK_B
\]
and
\[
C_A(\hat u^\infty)-\mathrm{Proj}_{\KK_A}(C_A(\hat u^\infty))
\in
\mathcal{S}_{\mathrm{lex}}.
\]
Hence \(\hat u^\infty\) is feasible for the first two levels of the trilevel problem. It remains to identify the value of the objective on this lexicographically feasible set. This is done exactly as in the one-penalty argument of \cite{andrews2025augmented}: lower semicontinuity of \(f\), together with residual convergence and boundedness, implies that every accumulation point attains the minimum value of \(f\) over the lexicographically feasible set.

\paragraph{Conclusion.}
Every accumulation point of \((\hat u^i)\) is therefore a solution of \eqref{eq:closest_feasible_IL2}. Since the sequence is bounded, if
\[
\mathrm{Dist}\!\left(\hat u^i,\mathrm{Sol}\!\left(\eqref{eq:closest_feasible_IL2}\right)\right)
\]
did not converge to zero, one could extract a subsequence staying at a positive distance from the solution set, and this subsequence would still admit an accumulation point. But every accumulation point belongs to the solution set, which is a contradiction. Therefore the whole sequence converges to the solution set of the trilevel problem.
\end{proof}

\section{Additional details on the toy-car benchmark}
\label{app:toy_car_details}
The toy-car environment is based on a kinematic bicycle system with state \((x,y,\psi,v)\) and controls \((a,\tau)\), where \(\tau=\tan(\delta)\). The road is a closed stadium-shaped track composed of two straights and two semicircular turns.

\paragraph{Expert demonstrations.}
The nominal expert is a pure-pursuit controller with curvature-dependent speed tracking. Its controls are perturbed by Ornstein--Uhlenbeck noise in both acceleration and steering channels. A pedestrian event is sampled independently for each rollout. When the event is active and the car reaches the trigger region near the crosswalk, the pedestrian becomes visible for \(3\) seconds and the expert switches to an emergency override: the steering command is set to zero and the longitudinal acceleration is replaced by a constant sampled braking value in the interval \([-3.0,-2.0]\), until the speed drops below a small stop threshold.

\paragraph{Dataset and policy class.}
The experiments use \(1000\) expert trajectories with time step \(0.05\) s and dataset horizon \(16\) s. Initial conditions are sampled near the outer lane, with fixed initial progress, lateral jitter \(0.2\), heading jitter \(0.1\) rad, and initial speed uniformly sampled in \([1.2,1.9]\). The learned policy is a multilayer perceptron with hidden widths \([512,512,512,512]\) and \(\tanh\) activations. Its input is the normalized vehicle state together with a one-hot encoding of pedestrian visibility, and its output is the control pair \((a,\tau)\).

\paragraph{Constraints and training.}
In the present experiments, the lower-priority constraint is the total-acceleration residual
\[
\hat a^2 + \left(\frac{v^2}{L}\hat \tau\right)^2 - a_{\mathrm{tot,max}}^2 \le 0,
\]
with \(a_{\mathrm{tot,max}}=2.5\). The higher-priority constraint is a pedestrian-braking residual based on the approximate free distance between the midpoint of the front edge of the car and the pedestrian disk, with a \(10\%\) radius margin and a clipped braking bound. This second constraint is enforced only on visible-pedestrian samples. The experiments do not include any lane-boundary CBF term or steering-centering loss. Training uses batch size \(2000\), \(15\) outer augmented-Lagrangian iterations, \(5000\) inner optimization steps per outer iteration, \(\gamma_{A,i}=5/(i+1)\), \(\gamma_{B,i}=15\), and weight decay \(10^{-7}\).

\paragraph{Evaluation protocol.}
We compare the trilevel hierarchy, the total-acceleration-only baseline, and the flat two-constraint baseline. Evaluation is performed by closed-loop rollouts of the learned policies on randomized initial conditions close to the nominal lane. The reported results use \(2000\) rollouts of horizon \(13\) s. We report pedestrian collision rates and complement them with qualitative trajectory plots and control or acceleration histograms, shown in the main text and supplementary figures.

\section{Supplementary figures}
\label{app:planned_figures}

Supplementary figures can be found \Cref{fig:toy_car_control_histograms}.
\begin{figure}[t]
    \centering
    \includegraphics[width=0.7\linewidth]{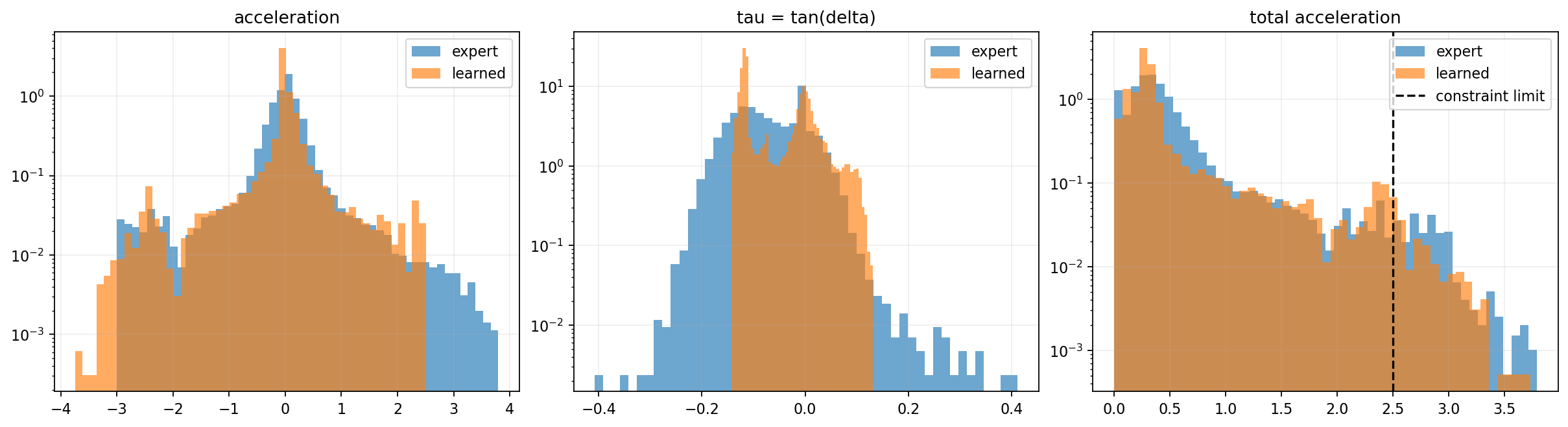}
    
    \small (a) Hierachical IALM
    \vspace{1em}

    \includegraphics[width=0.7\linewidth]{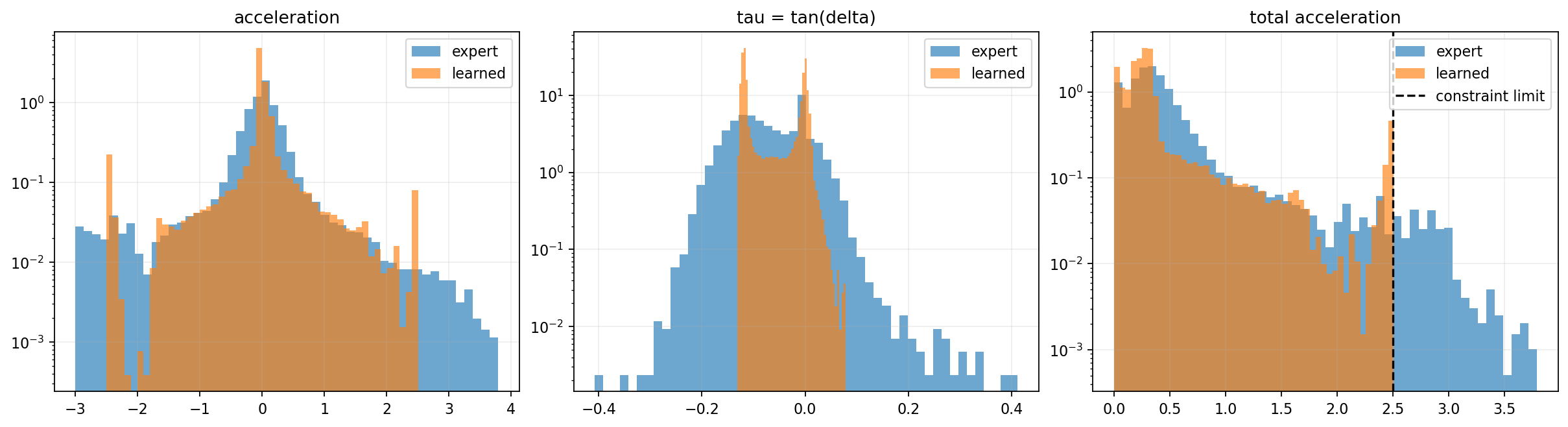}
    
    \small (b) Only total-acceleration constraint
    \vspace{1em}

    \includegraphics[width=0.7\linewidth]{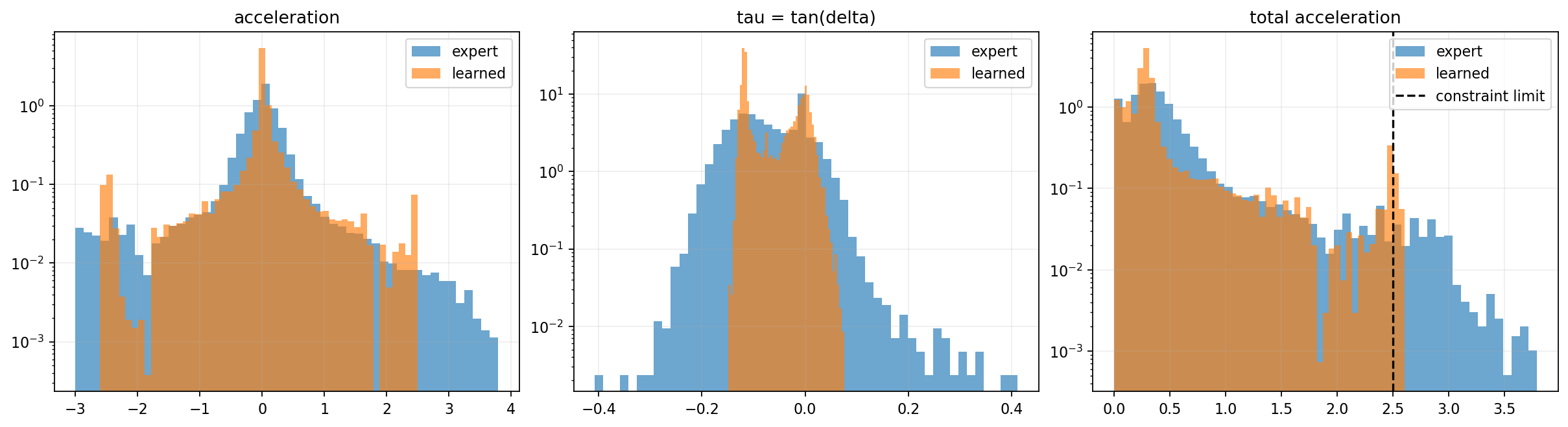}

        \small (c) Both constraints at the same level

    \caption{Control histograms for the expert and learned policies under the three optimization schemes. These supplementary plots help interpret how the hierarchy between pedestrian-avoidance and acceleration constraints changes the control distribution produced by the learned policy.}
    \label{fig:toy_car_control_histograms}
\end{figure}

\end{document}